\theoremstyle{definition}
\newtheorem{theorem}{\textbf{Theorem}}
\newtheorem{lemma}{\textbf{Lemma}}
\begin{document}

\title{Neural Predictor for Flight Control with Payload}

\author{Ao Jin$^{\dagger}$, 
	Chenhao Li$^{\dagger}$,
	Qinyi Wang,
	Ya Liu,
	Panfeng Huang,~\IEEEmembership{Senior Member,~IEEE},
	Fan Zhang$^*$,~\IEEEmembership{Member,~IEEE} 
	
	\thanks{Ao Jin is with Research Center for Intelligent Robotics, School of Automation, Northwestern Polytechnical University, Xi’an 710072, China. Chenhao Li, Qinyi Wang, Ya Liu, Panfeng Huang and Fan Zhang are with Research Center for Intelligent Robotics, Shaanxi Province Innovation Team of Intelligent Robotic Technology, School of Astronautics, Northwestern Polytechnical University, Xi’an 710072, China (E-mail: jinao@mail.nwpu.edu.cn, fzhang@nwpu.edu.cn).}
	\thanks{$\dagger$ These authors contributed equally to this work. }
	}




\maketitle

\begin{abstract}
Aerial robotics for transporting suspended payloads as the form of freely-floating manipulator are growing great interest in recent years. However, the force/torque caused by payload and residual dynamics will introduce unmodeled perturbations to the aerial robotics, which negatively affects the closed-loop performance. Different from estimation-like methods, this paper proposes Neural Predictor, a learning-based approach to model force/torque induced by payload and residual dynamics as a dynamical system. It yields a hybrid model that combines the first-principles dynamics with the learned dynamics. The hybrid model is then integrated into a MPC framework to improve closed-loop performance. Effectiveness of proposed framework is verified extensively in both numerical simulations and real-world flight experiments. The results indicate that our approach can capture force/torque caused by suspended payload and residual dynamics accurately, respond quickly to the changes of them and improve the closed-loop performance significantly. In particular, Neural Predictor outperforms a state-of-the-art learning-based estimator and has reduced the force and torque estimation errors by up to 66.15\% and 33.33\% while requiring less samples. The code of proposed Neural Predictor can be found at {\url{https://github.com/NPU-RCIR/Neural-Predictor.git}}.
\end{abstract}

\begin{IEEEkeywords}
Learning Based Control, Model Learning for Control, Model Predictive Control.
\end{IEEEkeywords}

\section{INTRODUCTION}
Owing to the advantages in terms of strong flexibility, safe manipulation, low cost and transportability, tethered-UAV system has been widely leveraged in various aerial transportation tasks. By attaching the payload to UAV through flexible tethers, there is no need to consider the shape and size of payload, making it highly adaptable to different transportation scenarios.

While attaching payloads to UAVs using flexible tethers offers significant advantages over rigid attachment methods such as robotic arms or clamps, several challenges remain to be addressed for real-world high-precision transportation tasks. The primary challenge in tethered-UAV systems lies in accurately estimating or predicting the external force/torque vector induced by the tether and payload. Recently, machine learning methods have demonstrated remarkable success in estimating external uncertainties and designing controllers across diverse applications. Additionally, the data-driven Koopman operator (DDKO) theory, which enables the discovery of unknown system dynamics from data in an analytical framework,  has garnered great attentions in control community. 

In this work, we introduce a learning-based framework called \textbf{Neural Predictor} to model the force/torque induced by the payload and residual dynamics of a tethered-UAV system. Specifically, leveraging insights from DDKO theory, we derive the formulation of lifted linear system (LLS), which is subsequently learned from data to capture the external force/torque. By integrating the learned dynamics with the nominal system dynamics, the hybrid model of the tethered-UAV system is constructed. This hybrid model is then embedding within a model predictive control framework, referred to as \textbf{NP-MPC}. An overview of the proposed framework is illustrated in Fig. \ref{illustration_framework}. Through extensive simulations and real-world flight experiments, we demonstrate that our approach not only outperforms state-of-the-art learning-based estimators in predicting external forces and torques but also achieves significant improvements in closed-loop control performance.

\section{Related Work}
The modeling of tethered-UAV system has been studied a lot in the literature. The studies \cite{Sreenath2013b,Cruz2015,Tang2018a,Li2023a} focused on modeling the UAV, tether and payload as a whole system. There is a transition of established dynamics when tether tension becomes zero, which posed great challenges for controller design. In addition, the tether is assumed to be massless and straight in \cite{Sreenath2013b}. However, some of them modeled tethered-UAV system upon assumptions that hardly satisfy in practice, some of them require the prior knowledge of payload and tether for modeling. The studies \cite{Yu2023c,Kong2024} estimated external force caused by unknown payload by adaptive law within the framework of backstepping, but the external torque did not consider. 

Recently, machine learning schemes have emerged as a promising approach for modeling complex aerodynamics and external force/torque of quadrotor. NeuroBEM \cite{Bauersfeld2021} utilized deep neural networks and BEM theory to model aerodynamic effect during high speeds or agile maneuvers. Neural-Fly \cite{OConnell2022} adopts a deep neural network (DNN) to predict the effect of strong wind. Gaussian processes (GP) regression \cite{Torrente2021} is also used to model residual aerodynamics but it suffers from computational complexity problem. The study \cite{Chee2022,Jiahao2023,Duong2024} use neural ordinary differential equations (NODE) to model uncertainty and integrate them into controller. However, the learned dynamics in these works are black-box and implicit. This leads to hindrances in analyzing the dynamical characteristics of external uncertainty and residual dynamics. Moreover, the learning-based models proposed in these works do not provide theoretical guarantees of the boundedness of prediction error either. The work \cite{Wang2024e} proposed NeuroMHE framework to estimate residual aerodynamics in a moving horizon manner. But it needs to solve a optimization problem online and the selection of moving horizon is thorny in practice.

\begin{figure*}[!t]
	\centering
	\includegraphics[width=38pc]{./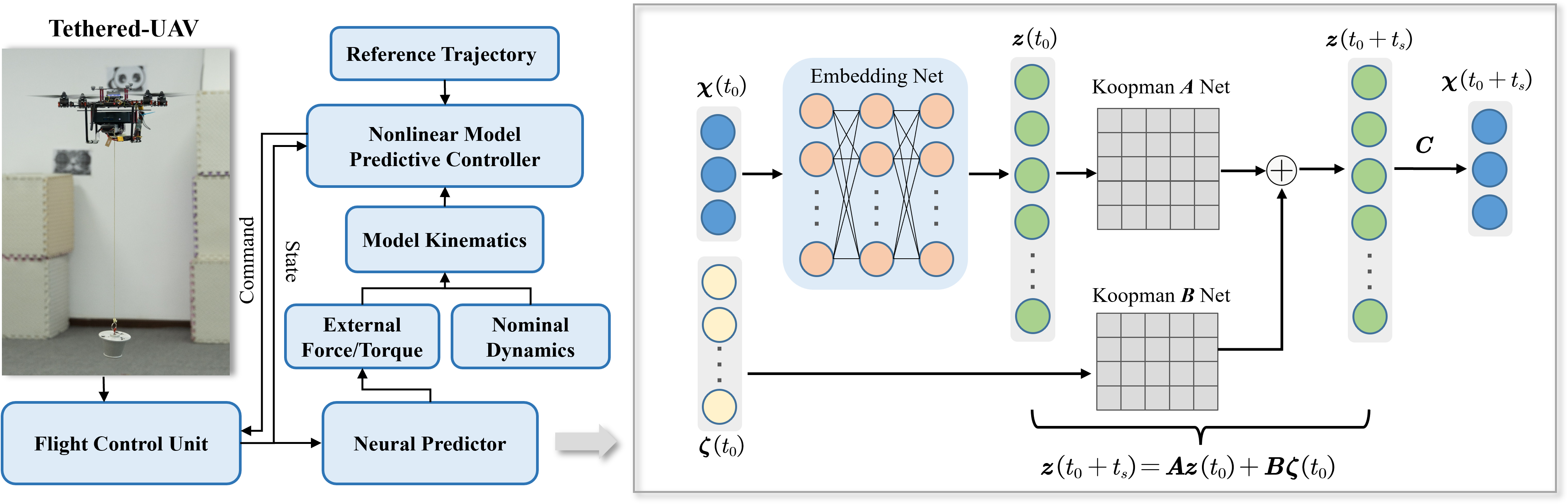} 
	\caption{The illustration of proposed framework: Neural Predictor cooperated with NMPC scheme.} \label{illustration_framework}
\end{figure*}

The \textbf{main contributions} of our work are summarized as follows. First, inspired by DDKO theory, the external force/torque induced by payload and residual dynamics during flight are explicitly modeled via a linear system, with the model parameters derived through a data-driven learning methodology. It results our proposed \textbf{Neural Predictor}, which offers greater interpretability than traditional black-box models. Furthermore, a theoretical guarantee is given for the boundedness of prediction error. Second, a learning-based control framework \textbf{NP-MPC} shown in Fig. \ref{illustration_framework} is proposed for flight control with payload. Our framework eliminates the necessity for modeling of payload and does not rely on any force/torque sensors. The results in both simulations and real-world experiments indicate that our proposed framework not only owns better ability of fast response and accurate estimation of external force/torque against state-the-of-art estimator, but also improves the closed-loop performance significantly.

\section{Problem Statement}
It is assumed that the quadrotor is a 6 degree-of-freedom (DoF) rigid body with mass $m$ and moment of inertia $\bm{J}\in \mathbb{R}^{3\times3}$. Then the dynamics of quadrotor of tethered UAV system could be written as 
\begin{equation}
	\begin{aligned}
		&\dot{\boldsymbol{p}}_w=\boldsymbol{v}_w, \ 
		\dot{\boldsymbol{v}}_w=m^{-1}\left(-mg\bm{z}+\boldsymbol{R}f_u\bm{z}+\bm{f}_p+\bm{f}_{\text{res}}\right)\\
		&\dot{\bm{R}}=\bm R \bm \omega_b^{\times }, \ 
		\dot{\boldsymbol{\omega}_b}=\boldsymbol{J}^{-1}\left(-\bm \omega_b^{\times}\bm J \bm \omega_b+\boldsymbol{\tau}_m+ \bm{\tau}_p+ \bm{\tau}_{\text{res}}\right) 
	\end{aligned}\label{quadrotordynamics}
\end{equation}
where $\bm{f}_p$ and $\bm{\tau}_p$ are the force and torque subjected to the quadrotor caused by payload, $\bm{f}_{\text{res}}$ and $\bm{\tau}_{\text{res}}$ are residual forces and torques, respectively. And $\bm{p}_w=\left[x,y,z\right]^\mathrm{T}$ and $\bm{v}_w=\left[v_x,v_y,v_z\right]^T$ are the position and velocity of the quadrotor's center-of-mass in the world frame $\mathcal{I}$, $\boldsymbol{\omega}_b=\left[{\omega}_x,{\omega}_y,{\omega}_z\right]^\mathrm{T}$ is the angular velocity in  the quadrotor body frame $\mathcal{B}$. The $\boldsymbol{R}\in SO^3$ is the rotation matrix from $\mathcal{B}$ and $\mathcal{I}$. The gravitational acceleration is $g$ and $\bm{z}=\left[0,0,1\right]^\mathrm{T}$. The $f_u$ and $\boldsymbol{\tau}_m=\left[{\tau}_{mx},{\tau}_{my},{\tau}_{mz}\right]^\mathrm{T}$ denote the total thrust and torque produced by the quadrotor’s four motors. We define $\bm{x}=\left[\bm{p}_w^\mathrm{T},\bm{v}_w^\mathrm{T},vec(\bm R),\bm{\omega}^\mathrm{T}\right]^\mathrm{T}$ as the quadrotor state, where $vec(\cdot)$ represents the vectorization of matrix and $\bm{u} = [{f}_u, {\tau}_{mx},{\tau}_{my},{\tau}_{mz}]^\mathrm{T}$ as the control input, respectively. For the sake of completeness, we define the external force/torque as the sum of force/torque caused by payload and residual force/torque, i.e., $\bm f_e = \bm f_p+ \bm f_{\text{res}}$ and $\bm \tau_e = \bm \tau_p+\bm \tau_{\text{res}}$.

\textit{Problem Formulation}: Considering the tethered-UAV system dynamics (\ref{quadrotordynamics}), the objective of this work is to learn the external force $\bm{f}_e$ and $\bm{\tau}_e$ induced by suspended payload and residual dynamics from data such that the closed-loop performance can be improved.

\section{Learning External Force and Torque Using Lifted Linear System}
\label{s3}
This section elaborates the details of proposed framework. The main idea is modeling the external force $\bm{f}_e$ and torque $\bm \tau_e$ as a dynamical system and learning them from data. Specifically, the proposed framework utilizes lifted linear system derived from Koopman operator theory to capture the external force/torque in an explicit way.

\subsection{Lifted Linear System}
The Koopman operator embeds nonlinear dynamical systems into equivalent infinite-dimensional linear representations from data \cite{Mamakoukas2023}. This provides a data-driven pattern to capture unknown nonlinear dynamics such as external force/torque of (\ref{quadrotordynamics}) in an explicit way. Thus, our work adopts this property to learn $\bm f_e$ and $\bm \tau_e$ in a data-driven manner. The basic idea of lifted linear system is introduced briefly as follows.

Consider an unknown nonlinear dynamics with control input
\begin{equation}
	\dot{\bm{x}} = \bm{f}(\bm{x},\bm{u}) \label{nonlinear_input}
\end{equation}
where $\bm{x}\in \mathbb{R}^n, \bm{u} \in \mathbb{R}^p$. We define a set of scalar valued functions $\bm g$, which are called embedding (lifting) functions. The set of embedding functions defines an infinite dimensional Hilbert space $\mathcal{H}$. We assume that the embedding functions take the form
\begin{equation}
	\bm{g}(\bm{x},\bm{u}) = \bm{\Phi}(\bm{x})+\bm{L}\bm{u}
\end{equation}
where ${\bm{L}}$ is a constant matrix. Under the assumption that control input $\bm u$ does not evolve in the Hilbert space $\mathcal{H}$, the dynamics of (\ref{nonlinear_input}) can be reformulated as \cite{Proctor2018,Morton2019}
\begin{equation}
		\begin{aligned}
			\bm{\mathcal{K}} \bm{g}(\bm{x}(t_0),\bm{u}(t_0)) &= \bm{g}(\bm{x}(t_0+t_s),0) = \bm{\Phi}(\bm{x}(t_0+t_s)) \\&=
			\left[ \begin{matrix}
				\bm{A}&		\bm{B}\\
			\end{matrix} \right] \left[ \begin{array}{c}
				\bm{\Phi}\left( \bm{x} \right)\\
				\bm{\bm{u}}\\
			\end{array} \right]
		\end{aligned}
\end{equation}
where $\bm{x}(t_0+t_s)=\bm{x}(t_0)+\int_{t_0}^{t_0+t_s}\bm{f}(\bm{x,u}){\text{dt}}$ with sampling time $t_s$. The above equation figures the forward-time evolution of the embedding functions $\bm g(\bm x)$. Taking the notion $\bm z(t) \triangleq \bm{\Phi}(\bm x(t))$, the above equation can be rewritten as 
\begin{equation}
	\bm{z}(t_0+t_s) = \bm{A}\bm{z}(t_0)+\bm{B}\bm{u}(t_0) \label{lls}
\end{equation}
which is called \textbf{lifted linear system (LLS)} of unknown nonlinear dynamics (\ref{nonlinear_input}). If there is a sequence of data $\{(\bm x_0,\bm u_0),\cdots,(\bm x_{T-2},\bm u_{T-2}),(\bm x_{T-1})\}$ of unknown nonlinear dynamics (\ref{nonlinear_input}), the matrices $\bm{A}$ and $\bm{B}$ can be approximated by solving the following least-squares minimization problem
\begin{equation}
	\bm{A},\bm{B} = \mathop{\arg\min\limits_{\bm{A},\bm{B}}} \Vert \bm{z}_{1:T-1} - (\bm{A}\bm{z}_{0:T-2}+\bm{B}\bm{u}_{0:T-2}) \label{LS_AB} \Vert
\end{equation}
where $\bm{z}_{0:T-2}= [ \bm z_0,\bm z_1,\bm z_i,\cdots,\bm z_{T-2} ]^\mathrm{T}$,  $\bm{z}_{1:T-1}= [\bm z_1,\bm z_2,\bm z_i,\cdots,\bm z_{T-1}]^\mathrm{T}$, $\bm{u}_{0:T-2}= [\bm u_0,\bm u_1,\bm u_i,\cdots,\bm u_{T-2}]^\mathrm{T}$, $\bm z_i$ represents the embedding state corresponding to the $i$-th sample $\bm{x}_i\ (i=0,1,\cdots,T-1)$. When the sequence length $T$ of $\bm{z}_{0:T-2}, \bm{u}_{0:T-2}$ is finite, the above least-squares minimization problem leads to a finite dimensional approximation of Koopman operator \cite{Hao2024}. The equation (\ref{lls}) is in discrete form as it is derived in a data-driven manner. Thus, according to (\ref{lls}), the continuous finite-dimensional approximation of unknown nonlinear system (\ref{nonlinear_input}) could be written as 
\begin{equation}
	\left\{ \begin{array}{c}
		\dot{\bm{z}}=\bm{A_c}\bm{z}+\bm{B_c}\bm{u}\\
		\bm x = \bm{\Phi}^{-1}(\bm \Phi(\bm x))\\
	\end{array} \right. \label{lift_linear}
\end{equation}
where $\bm{A_c}=\log(\bm{A})/t_s$,  $\bm{B_c}=\bm{B}/(\int_{0}^{t_s}e^{\bm{A_c}t}\text{dt})$, $\bm{z} = \bm{\Phi}(\bm{x}):\mathbb{R}^n \rightarrow \mathbb{R}^K$, $K\gg n$, $\bm{u}\in \mathbb{R}^p$, $\bm{A_c} \in \mathbb{R}^{K\times K}$,  $\bm{B_c} \in \mathbb{R}^{K\times p}$, $t_s$ is the sampling interval. 

\subsection{A Learning-Based Scheme to Capture External Force and Torque}
In this work, the unknown dynamics of external force and torque are assumed to be described by \cite{OConnell2022}
\begin{equation}
	\dot{\bm \chi} = \bm{\xi}(\bm \chi,\bm \zeta) \label{dynamics}
\end{equation}
where $\bm \chi = [ \bm f_e, \bm \tau_e]^\mathrm{T}$, $\bm{\xi}$ represents the unknown dynamics of the external force and torque,  and $\bm{\zeta}$ denotes the \textit{control input} of (\ref{dynamics}) and it is usually the state or substate (part of full state) of quadrotor. The selection of $\bm{\zeta}$ will be discussed in the implementation section later.

As discussed above, the unknown nonlinear dynamics like (\ref{dynamics}) can be captured by a finite-dimensional LLS (\ref{lift_linear}) from data. While it is not an easy task to find the embedding functions $\bm{\Phi}$. Therefore, a deep neural network parameterized by weights
 $\bm{\theta}=W^1,...,W^{L+1}$ is raised to approximate the finite dimensional embedding functions  
\begin{equation}
	\bm{\Phi}(\bm \chi;\bm{\theta}) = W^{L+1}\phi(W^L(...\phi(W^1 \bm \chi...)) \label{dnn}
\end{equation}
where $\phi$ is the ReLU activation function. Sample data from tethered-UAV system flying with a baseline controller to construct a dataset $\{\bm{x}^i_{0:m-1},\bm{u}^i_{0:m-1}\}$ $(i=1,\cdots,s)$ which contains $s$ trajectories. Every trajectory in the dataset has $m$ steps data, $\bm{x}^i_{0:m-1}=[\bm{x}^i_0,\cdots,\bm{x}^i_{m-1}]^\mathrm{T}$, $\bm{u}^i_{0:m-1}=[\bm{u}^i_0,\cdots,\bm{u}^i_{m-1}]^\mathrm{T}$. The labels for offline training $\bm{\chi}^i_{0:m-1}=[\bm{\chi}^i_{0},\cdots,\bm{\chi}^i_{m-1}]^\mathrm{T}$ are calculated by using (\ref{quadrotordynamics}).

To find the embedding functions associated with Koopman operator, we minimize the following loss function
\begin{equation}
	L =  \beta_1 L_{\text{forward}} + \beta_2 L_{\text{backward}} + \beta_3 L_{\text{recons}} \label{loss_function}
\end{equation}
where $\beta_1$, $\beta_2$ and $\beta_3$ are positive hyperparameters. The detailed implementation of learning LLS is depicted in Algorithm \ref{learn_lls}. Three loss functions $L_{\text{forward}}$, $L_{\text{backward}}$, and $L_{\text{recons}}$ are developed as follows. 

\begin{figure}[!t]
	\renewcommand{\algorithmicrequire}{\textbf{Input}}
	\renewcommand{\algorithmicensure}{\textbf{Output}}
	\begin{algorithm}[H]
		\caption{Offline Learning LLS}
		\label{learn_lls}
		\begin{algorithmic}[1]
			\REQUIRE A dataset $\{\bm{x}^i_{0:m-1},\bm{u}^i_{0:m-1}\}$ $(i=1,\cdots,s)$.
			\STATE Calculate labels $\bm{\chi}^i_{0:m-1}$ for training.
			\WHILE {$L$ not decrease to tolerance}
			\STATE Calculate embedding states $\bm{z}_{0:m-1}^i$,  $\bm{z}_{0:m-2}^i$, $\bm{z}_{1:m-1}^i$ with $\bm{\Phi}(\bm \chi;\bm{\theta})$.
			\STATE Estimate matrices $\bm{A}^i$, $\bm{B}^i$ by solving $\min\limits_{\bm{A}^i,\bm{B}^i} \Vert \bm{z}_{1:m-1}^i - (\bm{A}^i\bm{z}_{0:m-2}^i+\bm{B}^i\bm{\zeta}_{0:m-2}^i) \Vert$.
			\STATE Predict embedding states by iterating LLS forward and backward in time, $\leftidx{^{\text{f}}}{\hat{\bm{z}}}{^i_{0:m-1}}$, $\leftidx{^{\text{b}}}{\hat{\bm{z}}}{^i_{0:m-1}}$.
			\STATE Update parameters of $\bm{\Phi}(\bm \chi;\bm{\theta})$ and matrix $\bm{C}$ by minimizing loss function $L$ (\ref{loss_function}).
			\ENDWHILE
			\ENSURE Embedding functions $\bm{\Phi}(\bm \chi;\bm{\theta})$ and matrix $\bm{C}$.
		\end{algorithmic} 
	\end{algorithm} 
\end{figure}

\textbf{\romannumeral1) Multi-Steps Prediction Loss $L_{\text{forward}}$ and $L_{\text{backward}}$}: The embedding state at each sampling instant can be represented as $\bm{z}^i_{0:m-1} = [\bm{z}_0^i,\cdots,\bm{z}_{m-1}^i]^\mathrm{T}=[\bm{\Phi}(\bm{\chi_{0}}^i),\cdots,\bm{\Phi}(\bm{\chi}_{m-1}^i)]^\mathrm{T}$. The state predicted by iterating LLS (\ref{lift_linear}) with the initial state $\bm{z}_0^i$ forward in time and backward in time \footnote{Backward dynamics: $\bm{z_k}=\bm A^{-1}(\bm{z}_{k+1}-\bm B\bm{\zeta}_k)$} are denoted as $\leftidx{^\text{f}\hat{\bm{z}}_{0:m-1}^i} = [\leftidx{^\text{f}\hat{\bm{z}}_0^i},\cdots,\leftidx{^\text{f}\hat{\bm{z}}_{m-1}^i}]^\mathrm{T}$ and $\leftidx{^\text{b}\hat{\bm{z}}_{0:m-1}^i} = [\leftidx{^\text{b}{\hat{\bm{z}}_0^i}},\cdots,\leftidx{^\text{b}\hat{\bm{z}}_{m-1}^i}]^\mathrm{T}$, respectively. The loss function of the forward and backward prediction error is raised as 
\begin{equation}
	\begin{aligned}
		L_{\text{forward}} = \sum_{i=1}^{s} \mu_1^{i} {\rm MSE}(\bm{z}_{0:m-1}^i,\leftidx{^\text{f}\hat{\bm{z}}_{0:m-1}^i})  \\
		L_{\text{backward}} = \sum_{i=1}^{s} \mu_2^{i} {\rm MSE}(\bm{z}_{0:m-1}^i,\leftidx{^\text{b}\hat{\bm{z}}_{0:m-1}^i})\label{L2}
	\end{aligned}
\end{equation}
where the $\mu_1,\mu_2 \in(0,1)$ are hyperparameters. Loss function (\ref{L2}) focuses on minimizing the multi-steps prediction error forward and backward in time, which is beneficial to predict state more precisely over a longer horizon.

\textbf{\romannumeral2) Reconstruction Loss $L_{\text{recons}}$}: The state of nonlinear system should be constructed from the invariant Hilbert space $\mathcal{H}$. The second equation in (\ref{lift_linear}) needs to obtain the inverse expression of embedding functions (\ref{dnn}), which is relatively challenging. Therefore, this work utilizes $\bm{\chi}=\bm C\bm z$ for the reconstruction. The loss functions $L_{\text{recons}}$ is given by
\begin{equation}
	L_{\text{recons}}=\sum_{i=1}^{s}\Vert \bm{\chi}_{0:m-1}^i - \bm{C} \bm{z}_{0:m-1}^i \Vert
\end{equation}
where matrix $\bm{C}$ is parameterized by a linear layer without bias.

\subsection{Theoretical Guarantee for the Learned Dynamics}
As the learned dynamics may have a divergent prediction, which will engender an inaccurate prediction of external force/torque. Thus, a constraint for bounding the prediction error of the learned dynamics must be considered.
\begin{theorem}\label{error_boundness}
	Assuming that $\alpha_{\bm{\chi}} >0 $ and $\alpha_{\bm{\zeta}} >0 $ such that $\Vert \bm{\chi}(t_0+t_s)-\bm{\chi}(t_0)\Vert \le \alpha_{\bm{\chi}}$  and $\Vert \bm{\zeta}(t_0+t_s) - \bm{\zeta}(t_0) \Vert \le \alpha_{\bm{\zeta}}$ are satisfied, the approximated Koopman operator $\hat{\bm{\mathcal{K}}}$ is stable and the embedding functions associated it have a Lipschitz constant $L_{\bm{\Phi}}$, then the prediction error is globally bounded.
\end{theorem}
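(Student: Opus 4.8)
The plan is to cast the statement as a bounded-input bounded-state result for the error dynamics of the learned LLS, and then transfer the resulting bound back to the force/torque coordinates through the reconstruction map $\bm{C}$. Writing the true lifted trajectory as $\bm{z}_k = \bm{\Phi}(\bm{\chi}_k)$ and the model rollout as $\hat{\bm{z}}_{k+1} = \bm{A}\hat{\bm{z}}_k + \bm{B}\bm{\zeta}_k$ initialized at $\hat{\bm{z}}_0 = \bm{z}_0$, I would define the prediction error $\bm{e}_k = \bm{z}_k - \hat{\bm{z}}_k$. Because the true lifting obeys the linear recursion only approximately, I introduce the one-step residual $\bm{\delta}_k = \bm{z}_{k+1} - \bm{A}\bm{z}_k - \bm{B}\bm{\zeta}_k$, which immediately yields the linear error recursion $\bm{e}_{k+1} = \bm{A}\bm{e}_k + \bm{\delta}_k$ with $\bm{e}_0 = \bm{0}$.

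First I would bound the residual $\bm{\delta}_k$ uniformly in $k$. Using the Lipschitz property of the embedding, $\Vert \bm{z}_{k+1} - \bm{z}_k \Vert = \Vert \bm{\Phi}(\bm{\chi}_{k+1}) - \bm{\Phi}(\bm{\chi}_k)\Vert \le L_{\bm{\Phi}} \alpha_{\bm{\chi}}$, so the genuine increment of the lifted state over one sampling interval is controlled by $\alpha_{\bm{\chi}}$; combined with the bounded input increment $\alpha_{\bm{\zeta}}$ (entering through $\bm{B}$) and the boundedness of the physically admissible force/torque, which makes $\bm{z}_k = \bm{\Phi}(\bm{\chi}_k)$ bounded since $\bm{\Phi}$ is continuous, the residual admits a uniform bound $\Vert \bm{\delta}_k \Vert \le \bar{\delta}$, with $\bar{\delta}$ depending on $L_{\bm{\Phi}}$, $\alpha_{\bm{\chi}}$, $\alpha_{\bm{\zeta}}$, $\Vert \bm{B}\Vert$ and the spectral data of $\bm{A}$.

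Next I would exploit the stability of the approximated Koopman operator. Schur stability of $\bm{A}$, i.e. $\rho(\bm{A}) < 1$, guarantees constants $M \ge 1$ and $\gamma \in (0,1)$ with $\Vert \bm{A}^k \Vert \le M \gamma^k$. Solving the error recursion by discrete variation of constants gives $\bm{e}_k = \sum_{j=0}^{k-1} \bm{A}^{k-1-j}\bm{\delta}_j$, so that $\Vert \bm{e}_k \Vert \le M\bar{\delta}\sum_{j=0}^{k-1}\gamma^{k-1-j} \le M\bar{\delta}/(1-\gamma)$, a bound independent of the horizon $k$. Mapping back through the reconstruction $\bm{\chi} = \bm{C}\bm{z}$ then yields $\Vert \bm{\chi}_k - \hat{\bm{\chi}}_k \Vert = \Vert \bm{C}\bm{e}_k\Vert \le \Vert \bm{C}\Vert M\bar{\delta}/(1-\gamma)$, establishing that the force/torque prediction error is globally, i.e. uniformly in the prediction horizon, bounded.

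The main obstacle I anticipate is the uniform residual bound in the second step: the true lifted trajectory satisfies the linear model only approximately, so $\bm{\delta}_k$ must be controlled without assuming a perfect fit. The hypotheses are precisely what make this possible, since the increment bounds $\alpha_{\bm{\chi}}$ and $\alpha_{\bm{\zeta}}$ together with Lipschitz continuity keep the per-step mismatch from growing; but one must be careful to rearrange $\bm{\delta}_k$ so that the seemingly unbounded $(\bm{A}-\bm{I})\bm{z}_k$ contribution is absorbed through boundedness of the physical force/torque rather than left open. Once $\bar{\delta}$ is secured, the remaining steps are the standard stable-recursion-plus-geometric-series argument.
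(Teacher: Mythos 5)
Your proposal follows the same basic decomposition as the paper: the $n$-step error is written as a sum of one-step residuals propagated by powers of the learned operator (your variation-of-constants formula $\bm{e}_k=\sum_{j=0}^{k-1}\bm{A}^{k-1-j}\bm{\delta}_j$ is exactly the paper's $\Vert\bm{E}_n\Vert\le\sum_{i=0}^{n-1}\Vert\hat{\bm{\mathcal{K}}}^i\Vert\,\Vert\bm{e}(t_0+(n-i)t_s)\Vert$), the one-step error is bounded via the Lipschitz constant $L_{\bm{\Phi}}$ together with the increment bounds $\alpha_{\bm{\chi}},\alpha_{\bm{\zeta}}$, and stability closes the argument. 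The differences are in how the two halves are executed, and they cut in opposite directions. Where the paper imports the local error bound wholesale from Theorem 1 of the cited Koopman-approximation work (a bound of the form $(\Vert\bm{CA}\Vert L_{\bm{\Phi}}+1)\alpha_{\bm{\chi}}+\Vert\bm{CB}\Vert\alpha_{\bm{\zeta}}+\max_{\overline{\bm{\chi}}\in\mathbb{B}}\Vert\overline{\bm{\chi}}-\bm{C}\bm{\Phi}(\overline{\bm{\chi}})\Vert$), you derive the residual $\bm{\delta}_k$ yourself; this is more self-contained, but note that your rearrangement leaves the terms $(\bm{A}-\bm{I})\bm{z}_k$ and $\bm{B}\bm{\zeta}_k$, which require boundedness of $\bm{\chi}_k$ and $\bm{\zeta}_k$ themselves, not merely of their increments --- an extra assumption you flag but which is not in the theorem's hypotheses (the paper smuggles in the same thing through the compact set $\mathbb{B}$ in its imported constant $c$, so you are no worse off, only more explicit). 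Conversely, your stability step is actually \emph{stronger} than the paper's: you read ``stable'' as Schur stability, $\Vert\bm{A}^k\Vert\le M\gamma^k$ with $\gamma\in(0,1)$, and sum the geometric series to get a bound uniform in the horizon, whereas the paper's final display $\Vert c\Vert\sum_{i=0}^{n-1}\Vert\bm{\mathcal{K}}^i\Vert$ is only bounded uniformly in $n$ under exactly that stronger reading --- with mere power-boundedness it grows linearly in $n$, so your version repairs a looseness in the paper's last step. Finally, you work consistently in the lifted space and map back through $\bm{C}$ at the end, which is cleaner than the paper's mixing of lifted-space errors $\bm{E}_n$ with observable-space local errors $\bm{e}(t)=\hat{\bm{\chi}}(t)-\bm{\chi}(t)$ in the same sum.
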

\begin{proof}
	The global prediction error \cite{Mamakoukas2023} induced by the approximated Koopman operator $\hat{\bm{\mathcal{K}}}$ after $n$ sampling instants is given by
	\begin{equation}
		\bm{E_n} = \bm{z}(t_0+nt_s) - \hat{\bm{\mathcal{K}}}^n\bm{z}(t_0)
	\end{equation}
	
	By recursively iterating (\ref{lls}), the global prediction error can be represented as
	\begin{equation}
		\begin{aligned}
			\Vert \bm{E_n} \Vert &= \Vert \bm{\Phi}(t_0+nt_s) -\hat{\bm{\mathcal{K}}}^n\bm{\Phi}(t_0) \Vert  \\
			&=\Vert \sum_{i=0}^{n-1}\hat{\bm{\mathcal{K}}}^i\bm{e}(t_0+(n-i)t_s)\Vert \\
			&\le \sum_{i=0}^{n-1}\Vert \hat{\bm{\mathcal{K}}}^i\Vert \cdot \Vert \bm{e}(t_0+(n-i)t_s) \Vert
		\end{aligned}
	\end{equation}
	where the $\bm{e}(t)=\bm{\hat{\chi}}(t)-\bm{\chi}(t)$ is the local prediction error and $\bm{\hat{\chi}}$ denotes the prediction of external force and torque. 
	
	Since the $\bm{\chi},\bm{\zeta}$ and the embedding functions $\bm{\Phi}$ are Lipschitz continuous, the local prediction error $\bm{e}(t)$ is bounded by (Theorem 1 in \cite{Hao2024})
	\begin{equation}
		\begin{aligned}
			\lim_{n_h\rightarrow \infty} \text{sup} \Vert \bm{e}(t) \Vert &=(\Vert \bm{CA} \Vert L_{\bm{\Phi}}+1) \alpha_{\bm{\chi}}+\Vert \bm{CB}\Vert \alpha_{\bm{\zeta}} \\ &+\max_{\overline{\bm{\chi}}\in \mathbb{B}} \Vert \overline{\bm{\chi}} -\bm{C}\bm{\Phi(\overline{\bm{\chi}}}) \Vert \triangleq c
		\end{aligned}
	\end{equation}
	where $n_h$ is number of the last hidden layer of the embedding functions and $\mathbb{B}$ denotes a set that contains $\bm \chi$ at every sampling instant. As the prediction local error $\bm e(t)$ is bounded and the approximated Koopman operator $\hat{\bm{\mathcal{K}}}$ is stable, then 
	\begin{equation}
		\Vert \bm{E_n} \Vert \le \sum_{i=0}^{n-1}\Vert \hat{\bm{\mathcal{K}}}^i\Vert \cdot \Vert \bm{e}(t_0+(n-i)t_s) \Vert\le \Vert c \Vert \sum_{i=0}^{n-1}\Vert \bm{\mathcal{K}}^i \Vert
	\end{equation}
	is bounded. Therefore the prediction error is bounded.
\end{proof}

\subsection{Constrain Lipschitz Constant of Embedding Functions}
As stated in Theorem \ref{error_boundness}, the prediction error is bounded if embedding functions $\bm \Phi$ have a Lipschitz constant. To achieve this, the spectral normalization (SN) is utilized to constrain the Lipschitz constant of embedding functions $\bm{\Phi}$ in this work. By definition, the Lipschitz constant of a function $\rho$ is equal to the maximum spectral norm of its gradient i.e. $\Vert \rho \Vert_{\text{Lip}} = \sup \sigma (\nabla  \rho)$. Therefore, for embedding functions $\bm \Phi$ that parameterized by the deep neural network (\ref{dnn}), the Lipschitz constant can be bounded by constraining the spectral norm of each layer. 

The Lipschitz constant of a linear layer $g(\bm{x})=\bm{W}\bm{x}$ is given by $\Vert g \Vert_{\text{Lip}} = \sup \sigma(\nabla(g))=\sup \sigma(\bm{W})=\sigma(\bm{W})$. Thus, the Lipschitz constant of the embedding functions (\ref{dnn}) can be computed as \footnote{Using the inequality $\Vert g_1 \circ g_2 \Vert_{\text{Lip}} \le \Vert g_1 \Vert_{\text{Lip}}\cdot \Vert g_2 \Vert_{\text{Lip}}$.} 
\begin{equation}
	\begin{aligned}
		\Vert \bm{\Phi}(\bm{\chi}) \Vert_{\text{Lip}} &\le \Vert g^{L+1} \Vert_{\text{Lip}} \cdot \Vert \phi_L \Vert_{\text{Lip}} \cdots \Vert \phi_1 \Vert_{\text{Lip}} \cdot \Vert g_1 \Vert_{\text{Lip}} \\
		&= \prod_{l=1}^{L+1}\sigma(\bm{W}^l)
	\end{aligned}
\end{equation}
where the Lipschitz constant of ReLU activation function is equal to 1, i.e., $\Vert \phi_l \Vert_{\text{Lip}} = 1$. The spectral normalization applied to weights of each layer during training is given by
\begin{equation}
	\hat{\bm{W}} = \bm{W} / \sigma(\bm{W}) \cdot \gamma ^{\frac{1}{L+1}}  \label{sn}
\end{equation}

\begin{lemma} \label{lemma_lip}
	With the realization of spectral normalization (\ref{sn}) to the embedding functions, the Lipschitz constant of the embedding functions will satisfy:
	\begin{equation}
		\Vert \bm{\Phi}(\bm{\chi}) \Vert_{\text{Lip}}\le \gamma
	\end{equation}
	where $\gamma$ is the expected Lipschitz constant of $\bm{\Phi}$.
\end{lemma}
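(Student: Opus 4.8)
The plan is to build directly on the product bound for the Lipschitz constant of the network that was already established in the preceding subsection, namely $\Vert \bm{\Phi}(\bm{\chi}) \Vert_{\text{Lip}} \le \prod_{l=1}^{L+1}\sigma(\bm{W}^l)$, which followed from the submultiplicativity of Lipschitz constants under composition together with the fact that each ReLU layer has unit Lipschitz constant. Since spectral normalization (\ref{sn}) is applied to every weight matrix during training, the actual weights used in the forward pass of $\bm{\Phi}$ are the normalized matrices $\hat{\bm{W}}^l = \bm{W}^l / \sigma(\bm{W}^l) \cdot \gamma^{1/(L+1)}$, so the same product bound holds with each $\sigma(\bm{W}^l)$ replaced by $\sigma(\hat{\bm{W}}^l)$.

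The core computation is then to evaluate $\sigma(\hat{\bm{W}}^l)$. First I would invoke the absolute homogeneity of the spectral norm, i.e. $\sigma(c\bm{W}) = |c|\,\sigma(\bm{W})$ for any scalar $c$, which holds because the spectral norm is the largest singular value and scaling a matrix by $c$ scales all its singular values by $|c|$. Applying this with $c = \gamma^{1/(L+1)}/\sigma(\bm{W}^l)$ gives
\begin{equation}
	\sigma(\hat{\bm{W}}^l) = \frac{\gamma^{1/(L+1)}}{\sigma(\bm{W}^l)}\,\sigma(\bm{W}^l) = \gamma^{1/(L+1)},
\end{equation}
so each normalized layer contributes exactly the factor $\gamma^{1/(L+1)}$ regardless of the original weight magnitude.

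Substituting into the product bound over the $L+1$ layers then yields
\begin{equation}
	\Vert \bm{\Phi}(\bm{\chi}) \Vert_{\text{Lip}} \le \prod_{l=1}^{L+1}\sigma(\hat{\bm{W}}^l) = \prod_{l=1}^{L+1}\gamma^{\frac{1}{L+1}} = \gamma^{\frac{L+1}{L+1}} = \gamma,
\end{equation}
which is the claimed bound. Honestly there is no serious obstacle here: the argument is a one-line consequence of homogeneity of the spectral norm once the composition bound is in hand. The only points that warrant care are (i) confirming that $\sigma(\cdot)$ is indeed absolutely homogeneous so that the per-layer normalization exactly cancels $\sigma(\bm{W}^l)$ and leaves a clean $\gamma^{1/(L+1)}$, and (ii) keeping the statement as an upper bound, since the composition estimate is only submultiplicative and equality need not hold in general.
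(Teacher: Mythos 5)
Your proposal is correct and follows essentially the same route as the paper: apply the composition (submultiplicativity) bound to the spectrally normalized weights, note that $\sigma(\hat{\bm{W}}^l)=\gamma^{1/(L+1)}$ by homogeneity of the spectral norm, and multiply over the $L+1$ layers to obtain $\gamma$. Your version merely makes explicit the homogeneity step that the paper leaves implicit, which is a welcome clarification but not a different argument.
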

\begin{proof}
	Applying the spectral normalization to each layer of DNN, the Lipschitz constant of the embedding functions satisfies
	\begin{equation}
		\Vert \bm{\Phi}(\bm{\chi}) \Vert_{\text{Lip}} \le \prod_{l=1}^{L+1}\sigma(\hat{\bm{W}}^l) = \prod_{l=1}^{L+1}\gamma^{\frac{1}{L+1}}=\gamma
	\end{equation}
	So the Lipschitz constant of embedding functions could be bounded with spectral normalization. 
\end{proof}

\section{Implementation of Controller with Learned Dynamics}
As a well-known controller, robust MPC takes uncertainty into consideration \cite{Rawlings2017}. However, these methods rely on some hypotheses about uncertainty, which is hard to satisfy in practice. In addition, the closed-loop performance of MPC and accuracy of model are inextricably linked. The Neural Predictor can capture dynamics that is absent in the nominal dynamics from data. Therefore, the Neural Predictor integrated into MPC framework leads our proposed NP-MPC framework for tethered-UAV system, which is shown in Fig. \ref{illustration_framework} .

The MPC adopts an optimization-based approach to generate a sequence of optimal control inputs in a receding horizon manner. Thus, the optimization problem of NP-MPC considering the learned dynamics can be written as follows
\begin{equation}
	\begin{aligned} \label{nmpc}
			&\operatorname*{minimize}_{\mathbf{\bm{\bar{u}}}}&& \sum_{i=0}^{N-1}\left(\bm e_{\bm x_i}^\mathrm{T}\bm{Q}\bm e_{\bm x_i}+\bm e_{\bm u_i}^\mathrm{T}\bm{R}\bm e_{\bm u_i}\right)+\bm e_{\bm x_N}^\mathrm{T}\bm{P}\bm e_{\bm x_N}  \\
			&\text{subject to}&& \begin{aligned}&\bm{\bar{x}}_{0}=\bm{x}_k,  \quad  \bm{\bar{x}}_{N}\in\mathcal{X}_f,\\
				&\bm{\bar{x}}_{i+1}=\bm{f}_{\text{nominal}}(\bm{\bar{x}}_i,\bm{\bar{u}}_i)+\bm{\Xi}\bm{\hat{\chi}}_i,\end{aligned}  \\
			&&& \bm{\hat{\chi}}_i=\bm{C}\bm{z}_i=\bm{C}(\bm{A}^k\bm{z}_{i-1}+\bm{B}^k\bm{\zeta}_{i-1}),
			\\
			&&&\bm{\bar{x}}_i\in\mathcal{X},\quad \bm{\bar{u}}_i\in\mathcal{U},\quad \forall i=0,\ldots,N-1 
	\end{aligned}
\end{equation}
where $\bm{{x}}_k$ denotes the state of quadrotor at time instant $k$; $\bm{\bar{x}}_i$ and $\bm{\bar{u}}_i$ are the predicted state and control input at time instant {$k+i$}; $\bm e_{\bm x_i}=\bm{\bar{x}_i}-\bm x^*_{k+i}$ and $\bm e_{\bm u_i}=\bm{\bar{u}_{i}}-\bm u^*_{k+i}$ denote the state and control tracking errors at time time instant $k+i$; $\bm{\hat{\chi}}_i$ represents predicted external force/torque at time instant $k+i$; $\bm{\Xi}$ is a matrix mapping the learned external force/torque into state space; $\bm{z}_{-1}=\bm{\Phi}(\bm{\chi}_{k-1})$, $\bm{\zeta}_{-1}=\bm{\zeta}_{k-1}$, $\bm{\zeta}_i$ $(i=0,\cdots,N-1)$ is constructed with the predicted state $\bm{\bar{x}}_i$; the reference states and control inputs are generated by a trajectory generator; $N$ is the prediction horizon; $\mathcal{X}$, $\mathcal{U}$ and $\mathcal{X}_f$ are the state, control input and terminal state constraint sets; $\bm{Q}$ and $\bm{R}$ are the state and control input weighting matrices, $\bm{P}$ is the terminal state weighting matrix; $\bm{f}_{\text{nominal}}$ is the nominal discrete-time dynamics of quadrotor. At time instant $k$, utilizing the dataset $\{\bm{x}^{k}_{0:T-1},\bm{u}^{k}_{0:T-1}\}$, as formalized in Algorithm \ref{learn_lls}, the matrices $\bm{A}^k$ and $\bm{B}^k$ are estimated by solving the problem $\min\limits_{\bm{A}^k,\bm{B}^k} \Vert \bm{z}_{1:T-1}^{k} - (\bm{A}^k\bm{z}_{0:T-2}^{k}+\bm{B}^k\bm{\zeta}_{0:T-2}^{k}) \Vert$, where $\bm{x}_{0:T-1}^{k}=[\bm{x}_{k-T},\cdots,\bm{x}_{k-1}]^\mathrm{T}$, $\bm{u}_{0:T-1}^{k}=[\bm{u}_{k-T},\cdots,\bm{u}_{k-1}]^\mathrm{T}$.

\begin{figure*}[!t]
	\centering
	\includegraphics[width=36pc]{./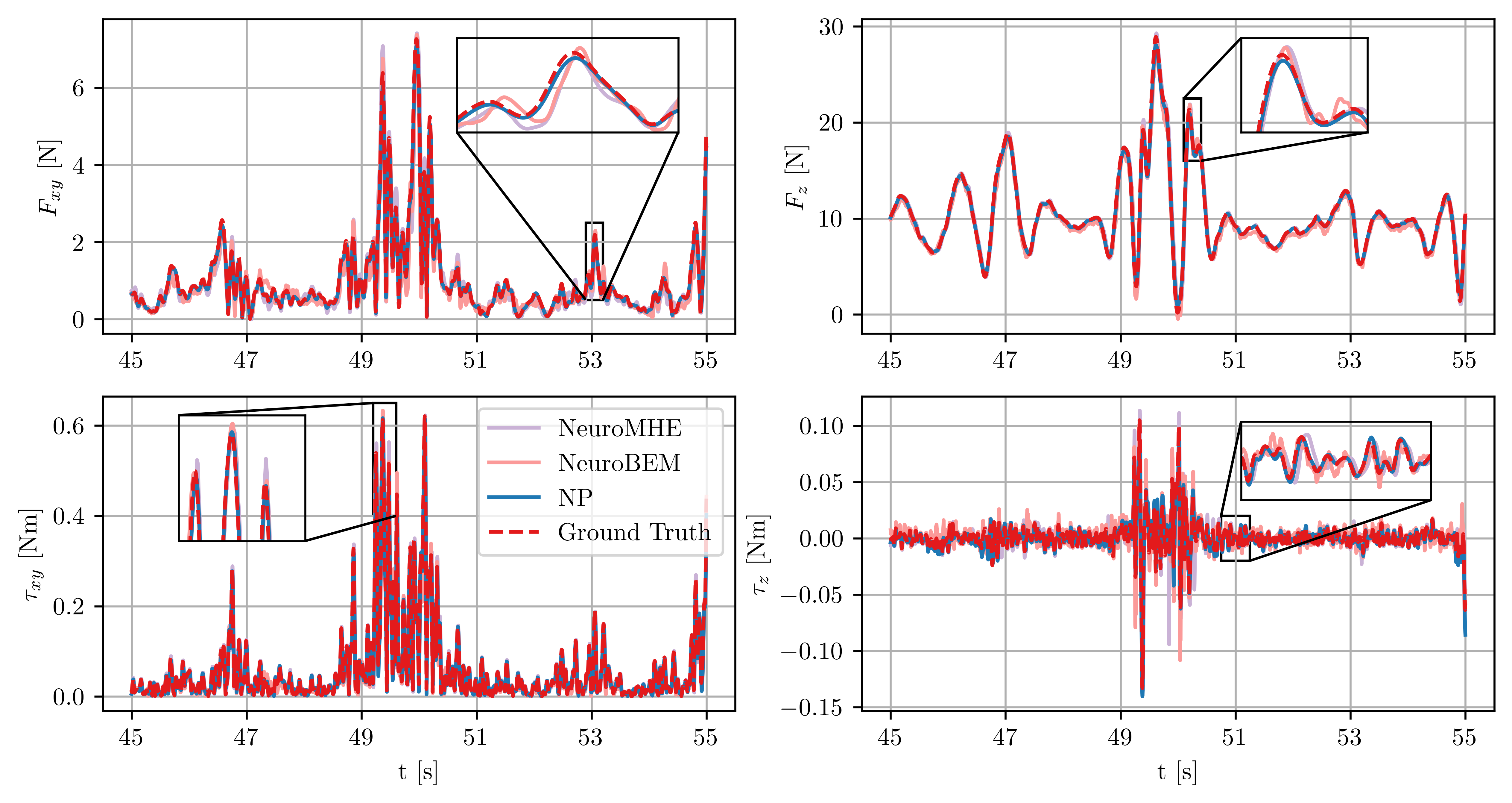} 
	\caption{Force and torque estimation performance of Neural Predictor, NeuroBEM and NeuroMHE on a part of ``Random points" test dataset (from 45s to 55s). The estimation errors on the whole ``Random points" trajectory are $\text{RMSE}_{F}^{\text{NeuroMHE}}=0.260$, $\text{RMSE}_{\tau}^{\text{NeuroMHE}}=0.012$, $\text{RMSE}_{F}^{\text{NeuroBEM}}=0.530$, $\text{RMSE}_{\tau}^{\text{NeuroBEM}}=0.013$, $\text{RMSE}_{F}^{\text{NP}}\textbf{=0.088}$, $\text{RMSE}_{\tau}^{\text{NP}}=\textbf{0.008}$, where $F_{xy} = \sqrt{{F_x}^2+{F_y}^2}$, $\tau_{xy} = \sqrt{{\tau_x}^2+{\tau_y}^2}$, $F = \sqrt{{F_x}^2+{F_y}^2+{F_z}^2}$ and $\tau = \sqrt{{\tau_x}^2+{\tau_y}^2+{\tau_z}^2}$. } \label{comparison_bem}
\end{figure*}

\section{Experiments and Results}
\label{s4}
In this section, we evaluate the performance of Neural Predictor both in numerical simulations and real-world flights to demonstrate superior effectiveness of our proposed scheme. The lifting functions $\bm \Phi$ are parameterized by a two-layer deep neural network with 128 neurons of each layer. The activation function is selected as ReLU. The hyperparameters are set to $\beta_1=\beta_2=\beta_3=1.0$, $\mu_1=\mu_2=0.999$. The dimension $K$ of LLS is set to 24 in all numerical simulations and physical experiments. The length of data sequence for matrices $\bm{A}$ and $\bm{B}$ estimation during training and online execution is set to 40.

\subsection{Numerical Evaluation}
In this part, Neural Predictor is compared with state-of-the-art learning-based estimator NeuroMHE \cite{Wang2024e} and NeuroBEM \cite{Bauersfeld2021} for demonstrating superiority of our proposed approach for capturing external force/torque. The dataset for training and validating in this part is the same as flight dataset\footnote{\url{https://rpg.ifi.uzh.ch/NeuroBEM.html}} presented in \cite{Bauersfeld2021}. This dataset was collected from a variety of agile and high-intensity flight maneuvers. The ground-truth of quadrotor states and time-varying external force/torque can be obtained from the dataset.

The Neural Predictor is trained using supervised learning as the NeuroMHE in numerical experiments (See Section VII-A in \cite{Wang2024e}) does. The inputs of Neural Predictor are linear and angular velocities of quadrotor. To ensure a fair comparison, the training dataset is selected as a 10-s long trajectory segment from a wobbly circle flight, covering a velocity range of 0.19 m/s to 5.18 m/s, which is consistent with training data used for NeuroMHE in \cite{Wang2024e}. And for test datasets, we also use the same 13 unseen agile flight trajectories in \cite{Wang2024e} to compare performance of Neural Predictor against NeuroMHE and NeuroBEM. 

\renewcommand\arraystretch{1.15}
\begin{table*}
	\caption{Comparison results of estimation errors (RMSEs) on 6 unseen agile flight trajectories}
	\label{table_bem}
	\begin{tabular}{m{2cm}<{\centering}m{1cm}<{\centering}m{1cm}<{\centering}m{1cm}<{\centering}m{1cm}<{\centering}m{1cm}<{\centering}m{1cm}<{\centering}m{1cm}<{\centering}m{1cm}<{\centering}m{1cm}<{\centering}m{1cm}<{\centering} m{1cm}<{\centering}}
		\Xhline{1.pt}
		\textbf{Flight Trajectory} & \textbf{Method} & \textbf{$\bm F_x$[N]} & \textbf{$\bm F_y$[N]} & \textbf{$\bm F_z$[N]} & \textbf{$\bm \tau_x$[Nm]} & \textbf{$\bm \tau_y$[Nm]} & \textbf{$\bm \tau_z$[Nm]} & \textbf{$\bm F_{xy}$[N]} & \textbf{$\bm \tau_{xy}$[Nm]} & \textbf{$\bm F$[N]} & \textbf{$\bm \tau$[Nm]} 
		\\ 
		\Xhline{1.pt}
		& NeuroBEM & 0.164 & 0.185 & 0.456 & 0.013 & 0.011 & 0.006 & 0.247 & 0.017 & 0.518 & 0.018 \\ 
		Linear oscillation & NeuroMHE &0.119 & 0.105 & 0.186 & 0.011 & 0.007 & 0.005 & 0.159 & 0.013 & 0.244 & 0.014 \\ 
		& NP & \textbf{0.041} & \textbf{0.024} & \textbf{0.101} & \textbf{0.007} & \textbf{0.004} & \textbf{0.004} & \textbf{0.048} & \textbf{0.008} & \textbf{0.112} & \textbf{0.009} \\ 
		\hline
		& NeuroBEM & 0.169 & 0.158 & 0.463 & 0.009 & 0.009 & 0.004 & 0.231 & 0.013 & 0.517 & 0.013 \\ 
		Race track\_1 & NeuroMHE & 0.141 & 0.092 & 0.115 & 0.007 & 0.004 & 0.004 & 0.168 & 0.009 & 0.204 & 0.009 \\ 
		& NP &\textbf{ 0.025} & \textbf{0.018} & \textbf{0.048} & \textbf{0.005} & \textbf{0.003} & \textbf{0.002} & \textbf{0.031} & \textbf{0.006} & \textbf{0.057} & \textbf{0.006} \\ 
		\hline
		& NeuroBEM & 0.110 & 0.129 & 0.470 & 0.006 & 0.009 & 0.004 & 0.170 & 0.011 & 0.499 & 0.011 \\ 
		3D circle\_2 & NeuroMHE & 0.140 & 0.135 & 0.075 & 0.003 & 0.002 & 0.004 & 0.194 & 0.004 & 0.208 & 0.006 \\ 
		& NP & \textbf{0.055} & \textbf{0.021} & 0.139 & 0.007 & 0.003 & 0.004 & \textbf{0.058} & 0.007 & \textbf{0.151} & 0.008 \\ 
		\hline
		& NeuroBEM & 0.145 & 0.168 & 0.584 & 0.010 & 0.012 & 0.006 & 0.221 & 0.015 & 0.624 & 0.017 \\ 
		Figure-8\_3 & NeuroMHE & 0.118 & 0.133 & 0.151 & 0.010 & 0.006 & 0.005 & 0.178 & 0.012 & 0.233 & 0.013 \\ 
		& NP & \textbf{0.058} & \textbf{0.029} & \textbf{0.127} & \textbf{0.007} & \textbf{0.004} & \textbf{0.004} & \textbf{0.065} & \textbf{0.008} & \textbf{0.143} & \textbf{0.009} \\ 
		\hline
		& NeuroBEM & 0.244 & 0.198 & 0.921 & 0.009 & 0.006 & 0.003 & 0.314 & 0.015 & 0.974 & 0.016 \\ 
		Melon\_2 & NeuroMHE & 0.254 & 0.213 & 0.094 & 0.005 & 0.003 & 0.004 & 0.331 & 0.005 & 0.344 & 0.007 \\ 
		& NP & \textbf{0.069} & \textbf{0.025} & 0.182 & 0.009 & 0.004 & 0.005 & \textbf{0.073} & 0.010 & \textbf{0.197} & 0.011 \\ 
		\hline
		& NeuroBEM & 0.161 & 0.183 & 0.471 & 0.008 & 0.008 & 0.005 & 0.244 & 0.012 & 0.530 & 0.013 \\ 
		Random points & NeuroMHE & 0.115 & 0.114 & 0.204 & 0.010 & 0.006 & 0.005 & 0.162 & 0.012 & 0.260 & 0.012 \\ 
		& NP & \textbf{0.032} &\textbf{ 0.028} & \textbf{0.077} & \textbf{0.007} & \textbf{0.004} & \textbf{0.003} & \textbf{0.042} & \textbf{0.008} & \textbf{0.088} & \textbf{0.008} \\ 
		\hline
		\Xhline{1.pt}
	\end{tabular}
	\label{MRFsum}
\end{table*}

The estimation performance of external force/torque on a part of aggressive trajectory (from 45s to 55s, labeled as ``Random points" in Table \ref{table_bem}) of NeuroMHE, NeuroBEM and Neural Predictor (NP) is shown in Fig. \ref{comparison_bem}. It is clear that all three methods can estimate the external force/torque accurately. However, during the aggressive maneuver of quadrotor (layout enlargement in Fig. \ref{comparison_bem}), both NeuroMHE and NeuroBEM fail to capture external force/torque well, resulting poor estimation performance. In contrast, the estimation of Neural Predictor remains closer to the ground truth, indicating that our proposed Neural Predictor maintains accurate force/torque estimation even in challenging scenarios. This can be attributed to the fact that Neural Predictor models the external force/torque as a linear dynamical system, with the state of quadrotor states serving as inputs. When quadrotor executes aggressive maneuvers, the changes of states drive Neural Predictor to respond quickly the changes of force/torque, which enables fast response. Furthermore, Neural Predictor, compared to state-of-the-art estimator NeuroMHE, has reduced the force and torque estimation error significantly by 66.15\% and 33.33\%, respectively. 

To further demonstrate the effectiveness of our approach, we evaluate the performance of Neural Predictor on 6 unseen agile flight trajectories. The detailed results are provided in Table \ref{table_bem}, where NP denotes our proposed Neural Predictor. The RMSEs of the estimation errors are computed in the same manner as NeuroMHE in \cite{Wang2024e}. It is clear that Neural Predictor achieves much smaller RMSEs for force/torque estimation and outperforms than NeuroMHE and NeuroBEM across all cases. Notably, our method has reduced the force estimation error by up to 72.06\% (for the``Race track\_1" trajectory in Table \ref{table_bem}).Regarding torque estimation, Neural Predictor outperforms NeuroMHE and NeuroBEM in four cases, with comparable performance in the remaining two cases. These results demonstrate that Neural Predictor owns good generalization ability across a variety of unseen aggressive trajectories and delivers superior performance for estimating force/torque compared to state-of-the-art learning-based estimator.

\begin{figure}[!t]
	\centering
	\includegraphics[width=21pc]{./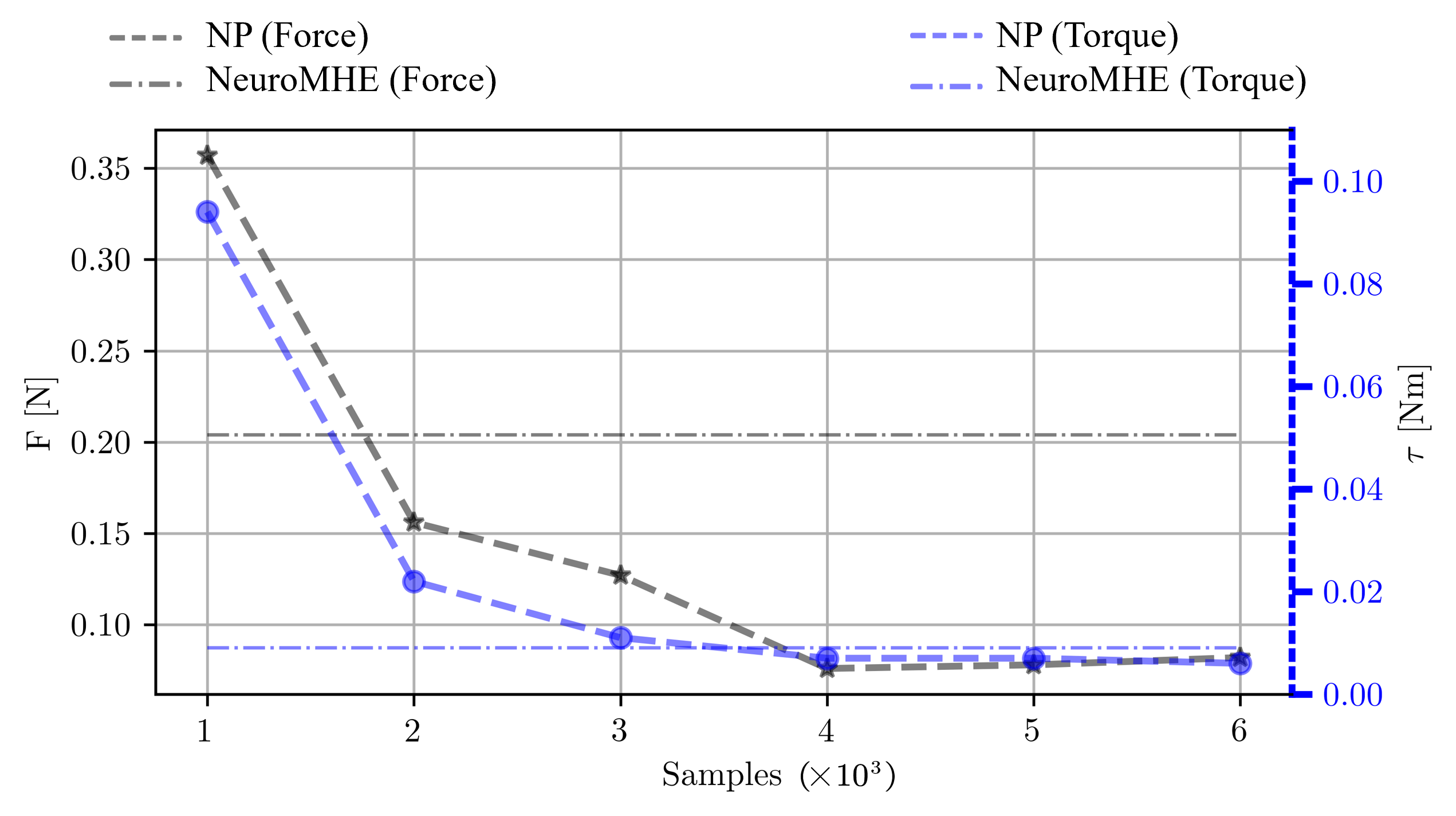} 
	\caption{The force and torque estimation RMSEs of Neural Predictor that is trained with different sample size on ``Race track\_1" test dataset. The gray and blue dash-dotted lines denote force and torque estimation RMSEs of NeuroMHE which is trained with 4K samples on ``Race track\_1" test dataset, respectively.} \label{sample_efficiency}
\end{figure}

To demonstrate the superior sample efficiency of the proposed method, we trained Neural Predictor with 1K, 2K, 3K, 4K, 5K and 6K samples\footnote{These samples represent 2.5-s, 5-s, 7.5-s, 10-s, 12.5-s and 15-s long segment of trajectory, respectively.} of training set. The estimation results on test dataset are presented in Fig. \ref{sample_efficiency}. As observed, the RMSE of the trained Neural Predictor decreases as the sample size increases. However, the performance of Neural Predictor trained with 4K, 5K, and 6K samples is nearly identical on the test dataset, indicating that 4K samples are sufficient to effectively capture the dynamics between force/torque and linear/angular velocity. Moreover, compared to NeuroMHE trained with 4K samples, Neural Predictor exhibits comparable performance even with fewer than 4K samples. Specifically, for force estimation, Neural Predictor achieves performance similar to that of NeuroMHE trained with 4K samples using only approximately 1.8K samples, significantly reducing the required sample size for training.

Overall, the numerical simulation results indicate that: 1) the proposed Neural Predictor owns capability of fast response and accurate estimation of external force/torque even there is an aggressive maneuvers, 2) the trained Neural Predictor demonstrates better generalization performance and sampling efficiency on unseen agile flight scenarios.

\subsection{Real-World Flight Experiments}
The setup for real-world flight experiments is shown in Fig. \ref{experiment_setup}. The custom quadrotor weighs 2.0 kg, with a motor wheelbase of 360 mm. It is equipped with a Pixhawk4 for attitude estimation and low-level control. The position and linear velocity of quadrotor are estimated from measurements of a motion capture system (VICON). The acceleration and angular velocity are measured from the accelerometers and gyroscope sensors of Pixhawk4. All implementations for solving the optimization problem (\ref{nmpc}) are deployed with CasADi \cite{Andersson2019} and acados \cite{Verschueren2022}. The data for training is collected at 50 Hz with a baseline MPC controller from a circular flight trajectory\footnote{The radius and flying speed are 1 m and 1.5 m/s, respectively.} of tethered-UAV system. The training set contains 60-s data (3000 data points), and the labels (external force/torque) for learning are calculated using equation (\ref{quadrotordynamics}). In the data collection experiment, the tether length is 0.8 m and the payload mass is 260 g, which are unknown to the pipeline of the proposed framework.

\begin{figure}[!t]
	\centering
	\includegraphics[width=18pc]{./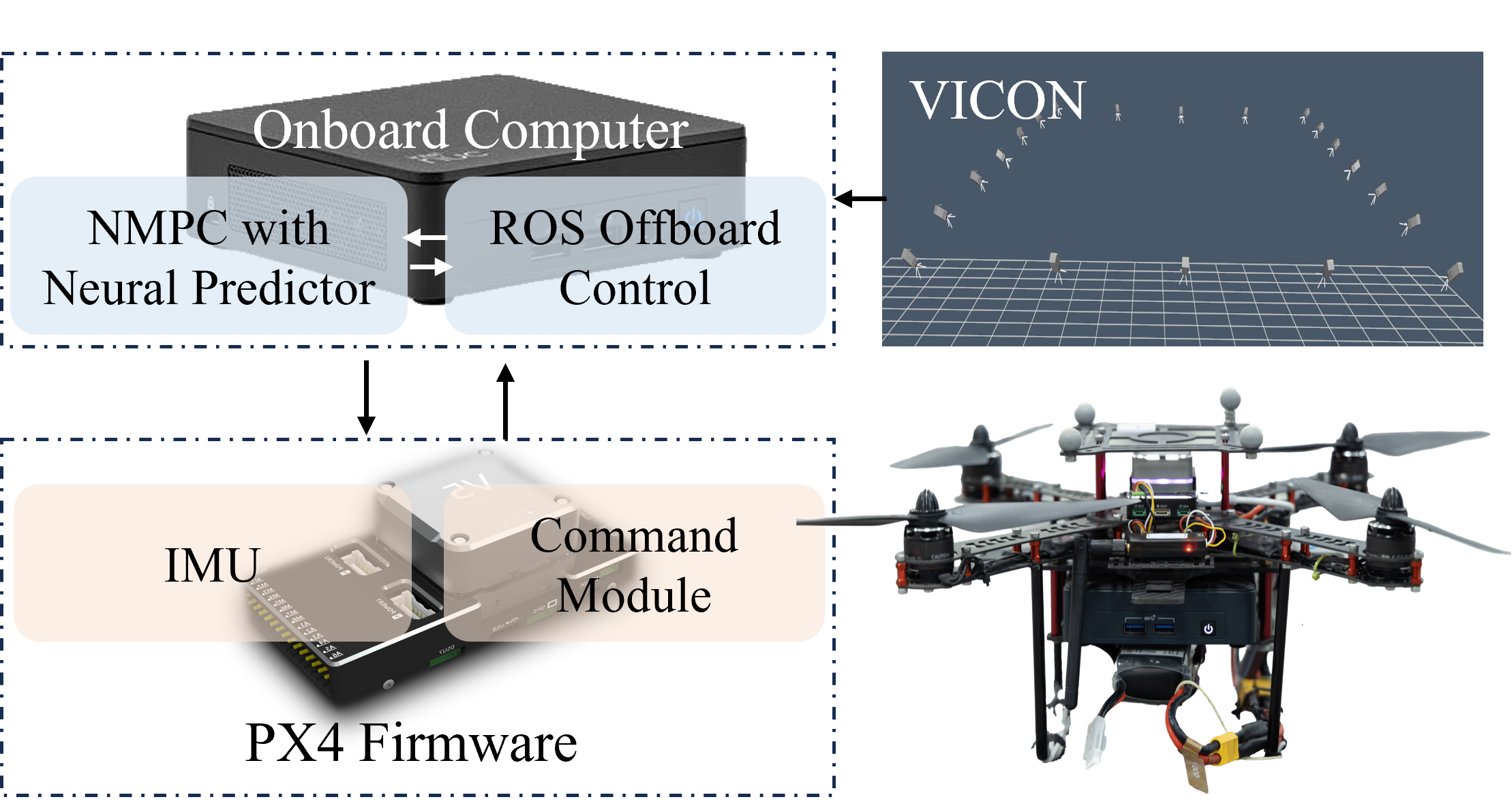} 
	\caption{The schematic of real-world flight experiments setup. } \label{experiment_setup}
\end{figure}

We first demonstrate the ability of Neural Predictor for accurately predicting the external force. During hovering with a payload weighted 260 g at a height of 1.6 m, an additional payload weighted 100 g is attached to the tethered-UAV. The prediction results of external force are shown in Fig. \ref{force}. It is seen that the dynamical response of Neural Predictor to changes in external force is almost instant. And when the weight of payload increased by 100 g, the output of Neural Predictor increased by 0.96 N with a standard deviation of 0.004 N. {Note that as there is residual dynamics, the changes of output of Neural Predictor does not strictly equal to the force of gravity on additional payload.} These results indicate that Neural Predictor achieves high-fidelity real-time estimation of external force. Furthermore, it can be adaptive to different unknown payload. 

Subsequently, real-world flight experiments involving trajectory tracking are conducted to verify the superiority of NP-MPC against nominal MPC and other two similar frameworks, GP-MPC \cite{Torrente2021} and  KNODE-MPC \cite{Chee2022}. For fair comparison,all models of three learning-based frameworks (NP-MPC, GP-MPC, and KNODE-MPC) are trained on identical 60-s circular trajectories obtained from the aforementioned data collection experiment. The KNODE network architecture maintains an identical configuration to the lifting function, while the GP model employs a radial basis function (RBF) kernel with 100 uniformly sampled data points for training. The inputs of NP, GP and KNODE models are all linear and angular velocity of quadrotor like we did in the numerical evaluation. The reference signals are produced by a polynomial trajectory generator and the reference height is 1.6 m, while the suspended payload mass is maintained at 260 g throughout the experiments. 

The performance results of four frameworks for tracking two trajectories lasting 50-s are summarized in Table \ref{table_xyz} and visualized in Fig. \ref{2dtraj}. Notably, due to the modeling mismatch induced by payload, nominal MPC has significant tracking errors in X-Y plane and Z-axis. In contrast, all three learning-based frameworks, GP-MPC, KNODE-MPC and NP-MPC, captured the external force induced by payload from data, which results smaller tracking error against nominal MPC. In the circle trajectory tracking experiment, GP-MPC, KNODE-MPC and NP-MPC own better performance than nominal MPC since they learned modeling mismatch from data and compensated it to the MPC framework. However, our proposed NP-MPC outperforms the other two frameworks and reduces the tracking error by  53.45\% in X-Y plane, 67.45\% in Z-axis compared with nominal MPC. Similar performance trends persist in the lemniscate trajectory tracking experiments, where all learning-based frameworks maintain advantages over nominal MPC. However, GP-MPC and KNODE-MPC exhibit substantial tracking degradation in high-curvature regions (characterized by small radius of curvature). Despite the lemniscate trajectory's geometric dissimilarity to the circular training data, NP-MPC maintains precise tracking even under these geometrically distinct conditions, achieving error reductions of 58.16\% (X-Y plane) and 76.72\% (Z-axis) compared to nominal MPC. These experimental findings collectively demonstrate that the Neural Predictor not only accurately estimates the external force induced by payload but also exhibits great generalization capabilities to out-of-distribution trajectory patterns unseen during training.

\begin{figure}[!t]
	\centering
	\includegraphics[width=21pc]{./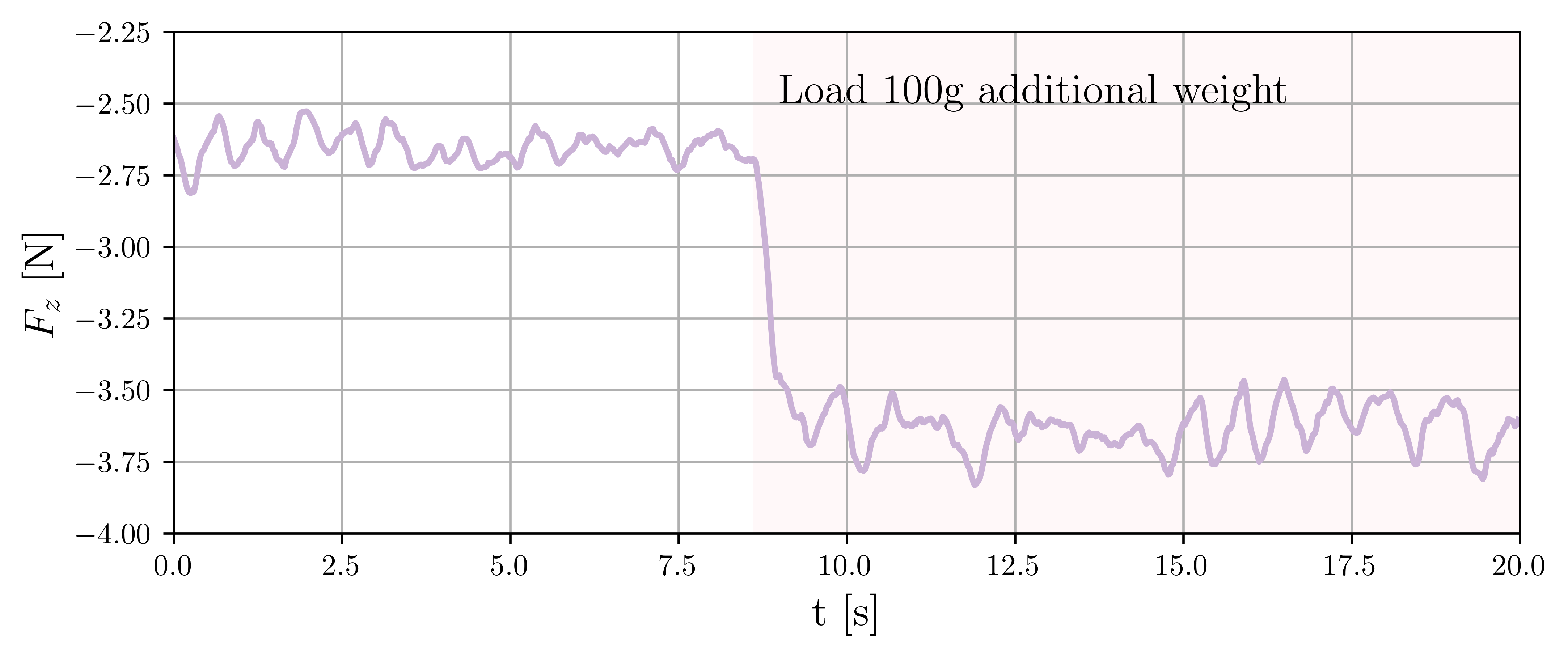} 
	\caption{Estimation of the external force in Z-axis when additional payload is attached.} \label{force}
\end{figure}

\renewcommand\arraystretch{1.15}
\begin{table}
	\caption{Comparison results of tracking errors (RMSEs) on two real-world flight trajectories}
	\label{table_xyz}
	\begin{tabular}{m{1.6cm}<{\centering}m{1.8cm}<{\centering}m{1.6cm}<{\centering}m{1.6cm}}
		\Xhline{1.pt}
		\textbf{Trajectory} & \textbf{Method} & \textbf{$\bm E_{xy}$ [m]} & \textbf{$\bm E_z$ [m]} 
		\\ 
		\Xhline{1.pt}
		& Nominal MPC & 0.1797 &0.2329 \\ 
		Circle & GP-MPC &0.1414 &0.1692 \\ 
		& KNODE-MPC & {0.0919} & {0.1307}  \\ 
		& NP-MPC & \textbf{0.0837} & \textbf{0.0758}  \\ 
		\hline
		& Nominal MPC &0.2029 & 0.2539  \\ 
		Lemniscate & GP-MPC & 0.1878 & 0.1966  \\ 
		& KNODE-MPC &{0.1503} & { 0.1406} \\ 
		& NP-MPC &\textbf{ 0.0849} & \textbf{0.0591} \\ 
		\hline
		\Xhline{1.pt}
	\end{tabular}
\end{table}

\begin{figure*}[!htbp]
	\centering
	\includegraphics[width=34pc]{./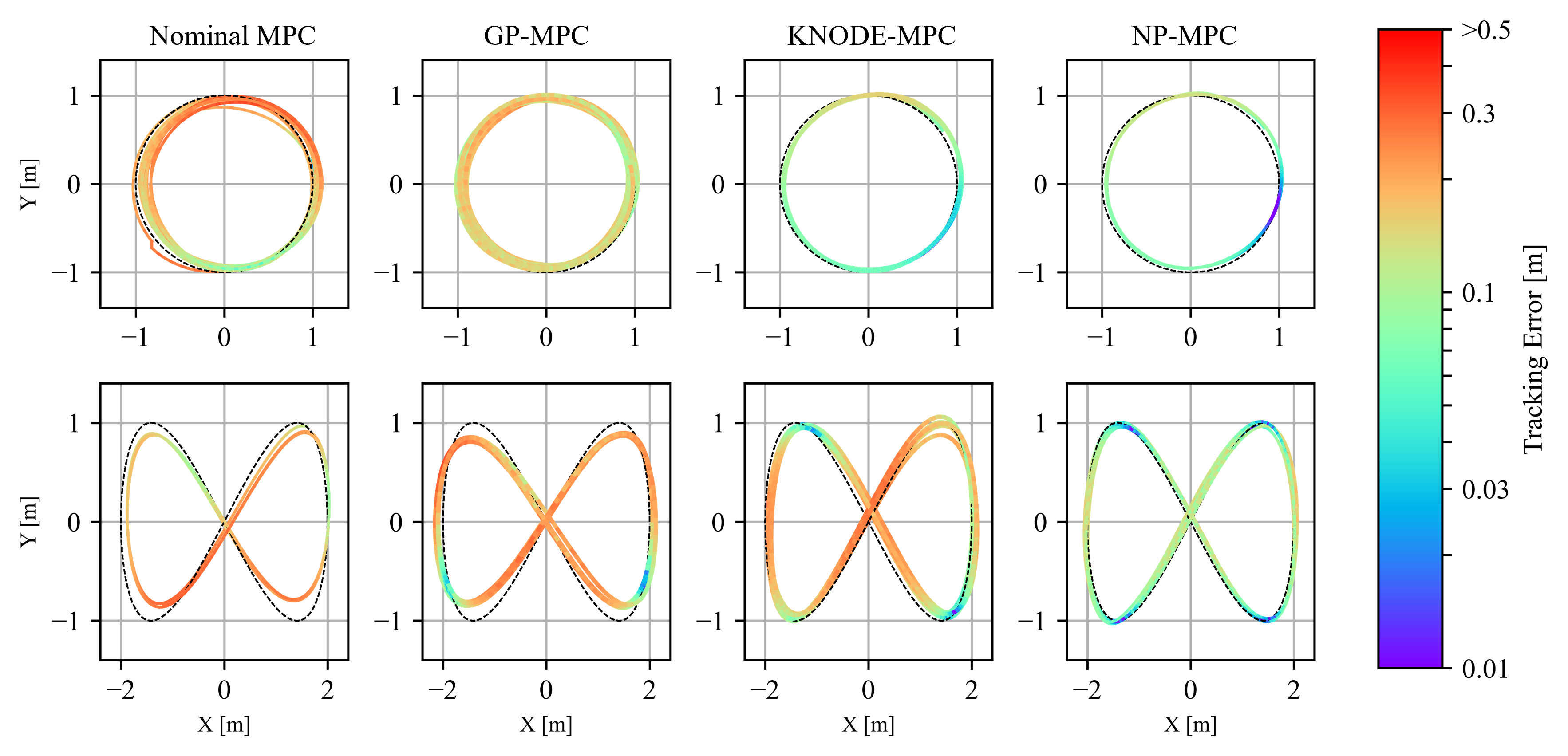} 
	\caption{Plane trajectory tracking performance of nominal MPC, GP-MPC, KNODE-MPC and NP-MPC with flying speed 1.5 m/s. The reference trajectory is shown in dot line. The tracking error refers the RSMEs between real flight trajectory and the reference trajectory.  }
	\label{2dtraj}
\end{figure*}

To validate the effectiveness of the proposed method for estimating external force and torque induced by the payload's oscillations, we conducted a hovering flight experiment. The detailed setup is consistent with the one described in Section VI-B. The target point of hovering is $(0,0,2\ \text{m})$. The payload is subjected to an externally imposed impulsive force (along X-axis) to deliberately induce a oscillation at the beginning of the experiment. We compare the hovering performance of nominal-MPC and NP-MPC under oscillatory payload conditions. The results are shown in Fig. \ref{hover_pose} and Fig. \ref{payload_pose}. The results indicate that the nominal-MPC can not effectively keep the tether-UAV system hovering at the target point under oscillatory payload conditions, with tracking errors exceeding 0.2 m ($\text{RMSE}=0.242$) in position. And there is a significant tracking error in Z-axis. While the proposed NP-MPC is caple of keeping the tether-UAV system hovering at the target point under oscillatory payload conditions, with tracking errors less than 0.1 m (RMSE=0.064) in position. The NP-MPC outperforms the nominal-MPC in terms of tracking performance, with a significant reduction in tracking errors. Beyond external force/torque estimation, our framework achieves oscillation mitigation (passive) through predictive compensation. The results in Fig. \ref{payload_pose} reveal a reduction in payload swing amplitude compared to nominal-MPC, with oscillations almost attenuated within 10 s. The nominal-MPC also achives oscillation mitigation, but the oscillation is eliminated relatively slower than NP-MPC. This owns to the fact that the nominal-MPC does not explicitly estimate the external force/torque induced by the payload's oscillations, leading to a slower response in oscillation mitigation and leading to a significant tracking error. The above results demonstrate that the proposed method can effectively estimate the external force/torque acting on the quadrotor, including in the presence of load oscillations. This capability is crucial for maintaining the quadrotor's stability and tracking performance during flight, particularly in scenarios where the payload is subject to oscillations. 

\begin{figure}[!t]
	\centering
	\includegraphics[width=17pc]{./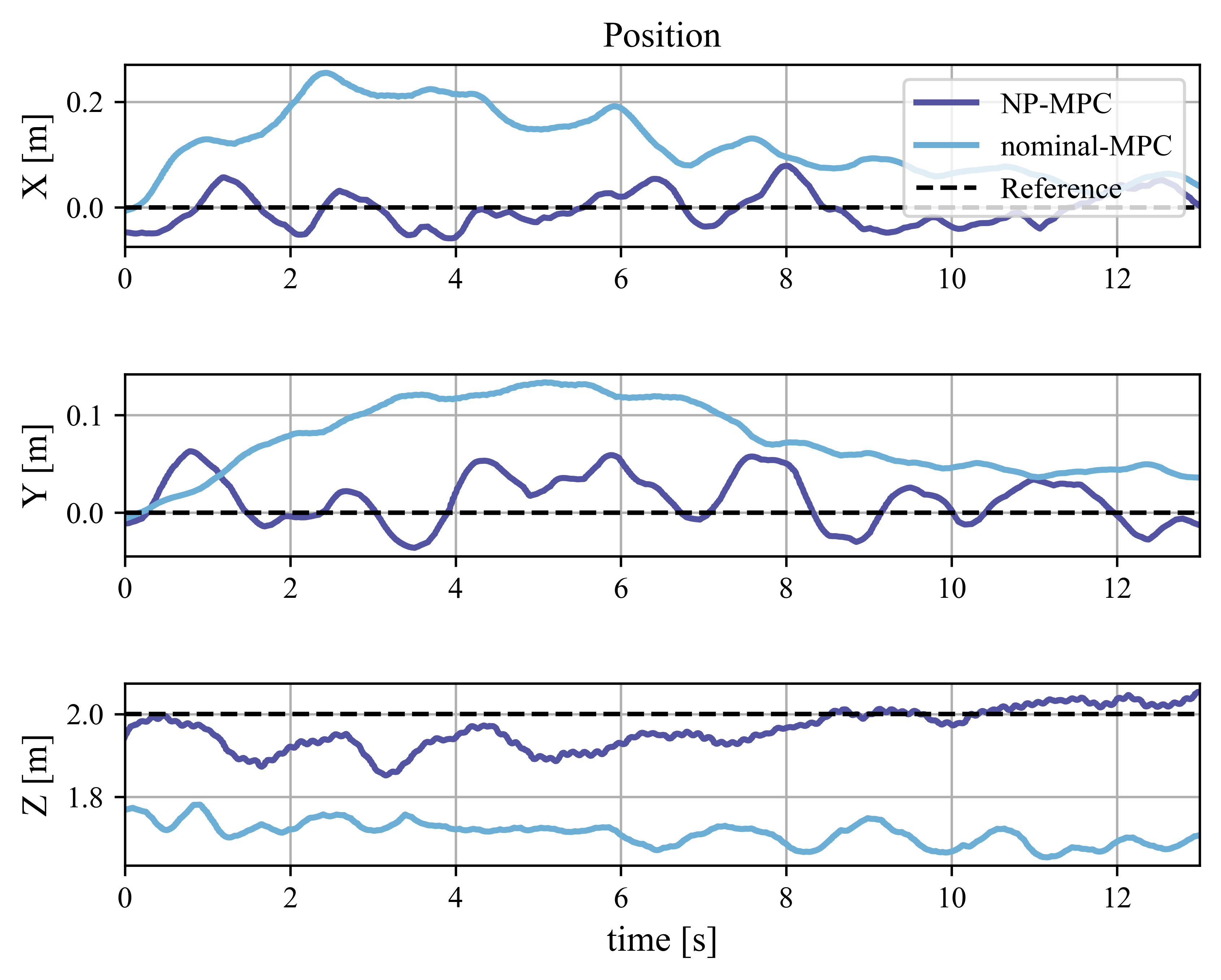} 
	\caption{The hovering experiment results of NP-MPC and nominal-MPC.} \label{hover_pose}
\end{figure}

\begin{figure}[!t]
	\centering
	\includegraphics[width=17pc]{./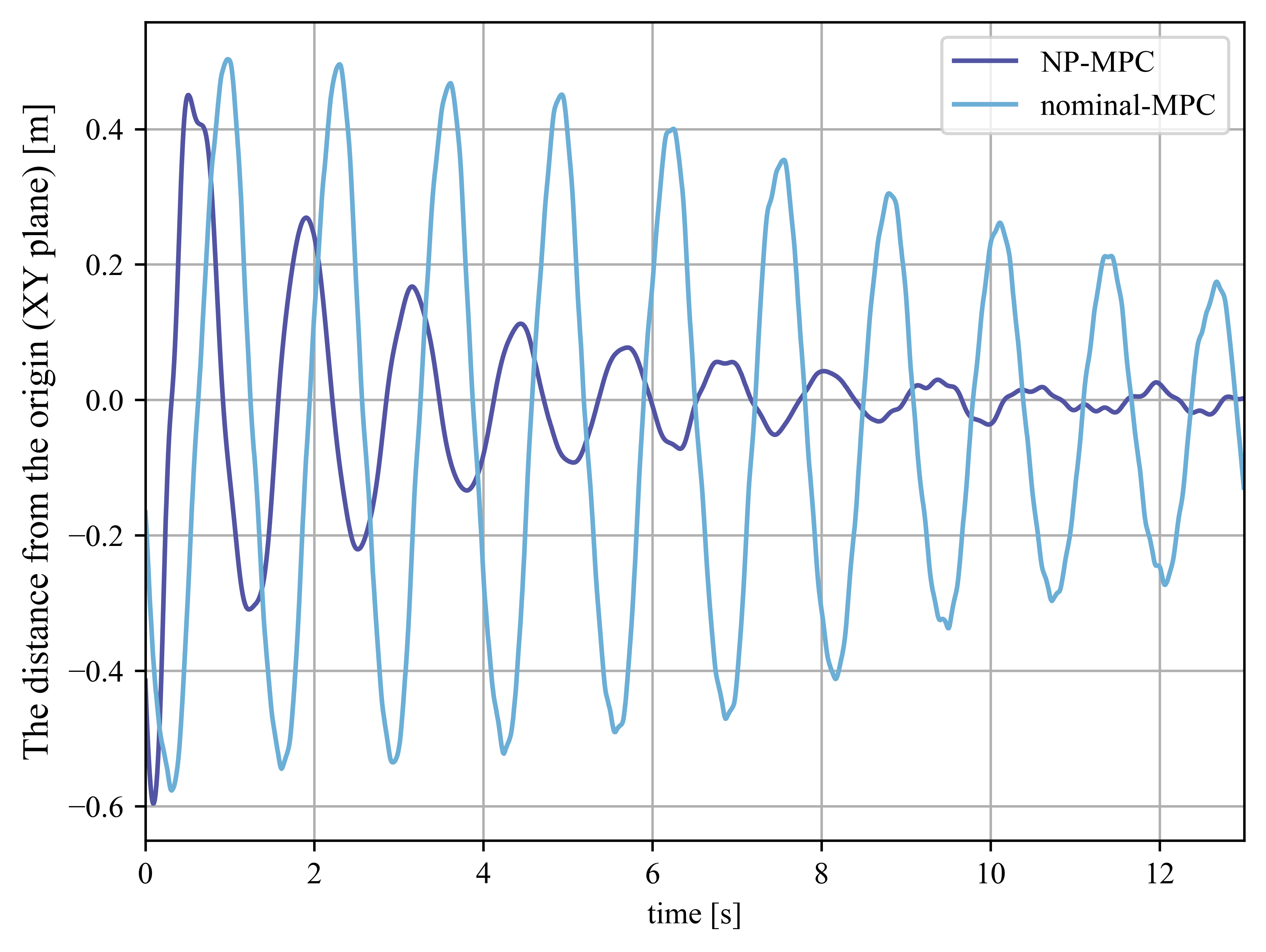} 
	\caption{The distance of payload from the origin (XY-axis plane).} \label{payload_pose}
\end{figure}

The computational time required for evaluating the neural network and solving the optimization problem (\ref{LS_AB}) on the companion computer\footnote{The companion computer of Pixhawk4 is equipped with an Intel Core i5-1135G7 and 16GB RAM} of Pixhawk4 is less than 5 ms. For solving the MPC problem (\ref{nmpc}), the computational time is less than 10 ms. This indicates that our framework is computationally efficient and feasible for practical applications.

Overall, the real-world flight results indicate that: 1) the fast response, accurate estimation and great generalization ability of proposed Neural Predictor have been further validated. 2) Neural Predictor cooperating with MPC can significantly improve the tracking performance and outperform other similar frameworks.

\section{CONCLUSIONS}
In this work, we present the Neural Predictor, a learning-based framework designed to accurately capture the external force/torque induced by suspended payload. Our critical insight is to model external force/torque as a dynamical system by utilizing DDKO theory. A theoretical guarantee is also given to bound prediction error. This makes Neural Predictor owns capability of fast response, good generalization, better sample efficiency and more accurate estimation of external force/torque over state-of-the-art estimator. Furthermore, combing Neural Predictor with MPC framework known as NP-MPC, significantly improves the closed-loop tracking performance and outperforms other similar frameworks in real-world experiments. 

The proposed method predicts the external force/torque induced by payload in a data-driven manner. The payload oscillation mitigation of the proposed method is achieved through predictive compensation, which is a passive approach. In future work, we plan to explore data-driven methods for actively controlling the payload's motion in conjunction with our proposed method. This will involve developing a more sophisticated control strategy that actively mitigates oscillations. We believe that this combination will enhance the overall performance of the tether-UAV system, particularly in dynamic scenarios where payload oscillations are a significant concern. Our future work will focus on extending the proposed method to address challenges in active payload oscillation mitigation and multi-UAVs cooperative transportation under unknown disturbances.

\bibliographystyle{IEEEtran}
\bibliography{main.bib}

\end{document}